\documentclass[twoside,11pt]{article}

\usepackage{blindtext}

\usepackage[preprint]{jmlr2e}

\usepackage{amsmath}
\usepackage{tikz}

\usepackage{lastpage}

\usepackage{lastpage}

\firstpageno{1}

\begin{document}

\title{Can Domain Generalization be Guaranteed in Small-Sample Learning? }

\author{\name Hong Zheng \thanks{This work was completed during my Ph.D. period and was the result of a flash of inspiration. I decided to submit it to arXiv and leave its value to the readers. } \email zhinhom@icloud.com} 

\maketitle

\begin{abstract} 
The small-sample learning problem remains a fundamental challenge in machine learning because limited training data lead to unstable model estimation and generalization. Structural Risk Minimization (SRM) has long been regarded as a principled solution under the classical i.i.d. assumption. However, domain generalization (DG) violates this assumption, leaving the theoretical role of SRM in DG largely unexplored. To bridge this gap, we establish the first theoretical guarantees for SRM in DG under mild assumptions. Specifically, based on the concept of stability, we derive learning consistency and generalization error bounds and prove that these bounds become tight when the hypotheses satisfy the stability condition. Building upon this, under a specific hypothesis space assumption, we establish stability, learning, and generalization bounds for SRM. We further discuss the applicability of these bounds to deep learning. This work establishes theoretical foundations for SRM under distribution shifts and sheds light on the design of robust DG algorithms in small-sample scenarios.  
\end{abstract}

\begin{keywords}
  Stability, Domain Generalization, Generalization Bound, Regularization
\end{keywords}

\section{Introduction}
\label{sec1} 

Domain generalization (DG) aims to train machine learning (ML) models using several available data domains, ensuring that they perform well on related but inaccessible (unseen) domains~\citep{b20}. This is meaningful because it uses finite sets to develop methods applicable to infinite sets, which can be regarded as a foundation for developing artificial general intelligence. Recently, several studies have explored invariance~\citep{b2, b15} and causality~\citep{b17, b22} from input data pairs, assuming they are essential for generalization, while developing learning approaches based on empirical risk minimization (ERM). By~\citep{b32}, the success of ERM and its variants relies on the assumption of sufficient data, i.e., large-sample learning. However, in real-world applications, such an assumption may not always be satisfied, particularly in scenarios where data collection is expensive. In other words, we have to face the small-sample learning problem (a term introduced by~\citeauthor{b32}) in many learning scenarios. Under such scenarios, model estimation and generalization become unstable~\citep{b5}. 

How can this challenge be addressed? Is it through data augmentation, data generation using large models, or Structural Risk Minimization (SRM)? Data augmentation, whether applied online or offline, violates data independence assumptions and cannot increase the amount of independent samples. Although it may improve empirical performance, model estimation remains theoretically unstable. Moreover, its effectiveness relies heavily on practitioners' experience rather than a principled learning paradigm~\citep{b39}. Recent work~\citep{b29} proves that using generated data can lead to model collapse. Specifically, learning from AI-derived texts causes models to forget rare information as outputs become increasingly homogeneous. Moreover, AI-generated images reproduce training-data semantics and appearances, indicating compromised data independence. Then, SRM becomes an applicable method, as it at least preserves the data independence assumption. However, in DG, data are not identically distributed, leaving the theoretical role of SRM unclear. 

In this work, we establish theoretical guarantees (Theorem~\ref{theo4}) for SRM under DG scenarios with mild assumptions, thereby bridging this gap. First, based on the concept of hypothesis stability, we derive learning consistency and generalization error bounds and prove that the tightness of these bounds is determined by the stability of the hypothesis. By assuming a reproducing kernel Hilbert space, we then establish stability, learning, and generalization bounds for SRM. Furthermore, we extend our theoretical results to deep learning by introducing corresponding regularizations for SRM and analyzing their stability, learning consistency, and generalization error. In conclusion, we contribute by theoretically proving that DG can be guaranteed in small-sample learning. Through this, we reveal that, when data samples are limited, a sufficiently large SRM regularization parameter is required to obtain tighter stability bounds for hypotheses, thereby achieving tighter learning and generalization bounds.

\section{Theoretical Results} 
\label{sec3} 

\subsection{Preliminaries}
\label{sec3.1}

We here introduce some notation and related concepts that will be used to derive our methods. Considering ${\cal X} \subset {\mathbb R}$ and ${\cal Y} \subset {\mathbb R}$ being respectively an input and output space, we have a training set 
\begin{align}
D = \{ Z_1 = (X,Y)_1, \ldots ,Z_E = (X,Y)_E\} \nonumber
\end{align} 
of size $E$ in ${\cal Z} = {\cal X} \times {\cal Y}$ drawn from unknown distributions in ${\cal P}$. Assume that each $Z \in D$ contains $N$ data pairs $(x_j, y_j)$, i.e., $(X,Y): = \{ (x_j, y_j) \} _{j = 1}^N$. Under domain generalization (DG), $ {(X,Y)_i}{ \sim ^{i.i.d.}}{P_i}$ for all $ i \in [E]$, and $ {P_i} \ne {P_j}$ for all $i \ne j \in [E]$. Thus, the overall samples follow an independent non-identically distributed (i.n.d.) assumption. The training set $D$ is typically regarded as the source set, while the testing set $D_t$ is regarded as the target set. Similarly, ${P_i} \ne {P_t}$ for all $i \in [E]$ and all $t \in [E_t]$, where $E_t$ denotes the number of target domains in $D_t$. Furthermore, no information from $D_t$ is available during training. 

Given a source set $D$ of domain size $E$, for each $i = 1, \ldots, E$, we construct a modified dataset by replacing the $v$-th domain of $D$ as follows: 
\begin{align}
D^v = \{ Z_1, \ldots ,Z_{v - 1}, Z’_v, Z_{v + 1} \ldots ,Z_E \}, \nonumber 
\end{align}
where the replacement $Z'_v$ is assumed to be drawn from ${\cal P}$ and to satisfy the above distribution assumption. Here, $Z’_v$ also denotes a group of $N$ sample pairs, rather than a single sample pair, which differs from the setting in~\citep{b30}. Only $D$ participates in the learning process, whereas $D^v$ is excluded from learning and is used solely for stability validation. 

We consider a hypothesis $h \in {\cal H} : {\cal X} \to {\cal Y}$ as a deterministic mapping for prediction. Then, we assume that all mappings are measurable and all sets are countable, which does not restrict the generality of the results presented hereafter. To measure the accuracy of predictions from $h$, we consider the cost function $ c:{\cal Y} \times {\cal Y} \to {\mathbb R}$, and the loss of $h$ with respect to data pairs $Z=(X, Y)$ is then defined as 
\begin{align}
\ell (h,Z) = \ell (h,(X,Y)) = c(h(X),Y). \nonumber
\end{align}
Correspondingly, given any set $D$, the expected risk of $h$ is defined by 
\begin{align}
R(h, D) = \frac{1}{E}\sum_{i=1}^E{ {\mathbb E}_{Z \sim P_i}[ \ell(h,Z) ] } = \frac{1}{E}\sum_{i=1}^E{ {\mathbb E}_{Z_i}[ \ell(h,Z_i) ] }, \nonumber 
\end{align} 
and the empirical risk of $h$ is defined by  
\begin{align}
\hat R(h, D) = \frac{1}{EN}\sum_{(i,j)}{ \ell(h, (x_j, y_j)_i) } = \frac{1}{EN}\sum_{i=1}^E{\ell(h, Z_i)},  \nonumber 
\end{align} 
where $(i,j) \in [E] \times [N]$. Given the set $D^v$, we consider only the empirical risk $\hat R(h, D^v)$. In the following, we will also use the shorthand notations $\hat R(h) \equiv \hat R(h,D)$, $R(h) \equiv R(h,D)$, and $\hat R’(h) \equiv \hat R(h, D^v)$. 

Note that $R(h)$ is typically defined as $R(h) = {\mathbb E}_{P \in {\cal P}_{E}}{ {\mathbb E}_{Z \in P} [\ell(h, Z)] }$, where ${\cal P}_{E}$ denotes the set of source distributions. Since ${\cal P}_{E}$ contains $E$ observed domains in DG, the outer expectation reduces to the empirical average over domains, i.e.,${\mathbb E}_{P \in {\cal P}_{E}}[\cdot]=E^{-1}\sum_{i=1}^{E}[\cdot]$.

\subsection{Stability and Bounds}
\label{sec3.2} 
We begin by establishing the basic definition of hypothesis stability under DG scenarios. This is important for achieving our goal: we aim to obtain bounds on the learning consistency and generalization error of hypotheses learned by SRM, and we want these bounds to be tight when the hypotheses satisfy the stability condition. 
\begin{definition}[Error Stability] 
\label{defi1}
Any hypothesis $h \in {\cal H}$ has error stability $\beta$ with respect to the loss function $\ell$ if the following holds, $\forall D \in {\cal Z}^E$, $\forall v \in [E]$, 
\begin{align}
\frac{1}{EN}\left| \sum_{Z \in D}\ell(h, Z) -  \sum_{Z' \in D^v}\ell(h, Z')\right| \le \beta, \nonumber 
\end{align}
which can also be written 
\begin{align}
\label{defi1eq1}
\left| {\hat R(h) - \hat R’(h)} \right| \le \beta. 
\end{align}
\end{definition}
\noindent\textbf{Remarks}. (a) Strictly, we should write ${\cal Z}^{E \times N}$ instead of ${\cal Z}^{E}$. For simplicity, we use ${\cal Z}^{E}$ hereafter. (b) The above stability measures the absolute difference between the empirical risks of the same hypothesis $h$ on two datasets, $D$ and $D^v$. This notion differs from the stability definitions in~\citep{b30}, where stability is typically defined in terms of the expected risks of different hypotheses. 

Clearly, a small $\beta$ indicates that the hypothesis exhibits good stability. Based on this definition, we establish the following exponential decay bound, which is derived from a classical concentration inequality. 
\begin{theorem}
\label{theo1}
For any measurable function $R:{\cal H} \times {\cal Z}^E \to {\mathbb R}$, any hypothesis $h \in {\cal H}$ that satisfies the stability in Definition~\ref{defi1}, i.e., $h$ has $\beta$-stability, and any $\varepsilon > 0$, the following inequality holds, $\forall D \in {{\cal Z}^E}$,  
\begin{align}
\label{them1eq1}
P_D\left[ \hat R(h) - R(h) \ge \varepsilon \right] \le \exp \left\{  \frac{-2\varepsilon^2}{E\beta}  \right\}. 
\end{align}
\end{theorem}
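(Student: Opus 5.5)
The plan is to read $\hat R(h)=\hat R(h,D)$ as a function $f(Z_1,\ldots,Z_E)$ of the $E$ independent domain blocks and apply McDiarmid's bounded-differences inequality with the blocks $Z_i$, not the individual pairs, as coordinates. This fits the setting. Under the DG assumption the blocks $Z_i\sim P_i$ are independent but not identically distributed, and McDiarmid only needs independence of the coordinates. Here $h$ is a fixed hypothesis, not one learned from $D$, so $f$ is simply a deterministic measurable function of the blocks.

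\textbf{Step 1: identify the mean.} Because $h$ does not depend on $D$, linearity of expectation gives
\begin{align}
\mathbb{E}_D\big[\hat R(h,D)\big]=\frac{1}{EN}\sum_{i=1}^E \mathbb{E}_{Z_i}\big[\ell(h,Z_i)\big]. \nonumber
\end{align}
Here $\ell(h,Z_i)$ denotes the sum of the $N$ pairwise losses in block $i$. Since each pair in block $i$ is drawn from $P_i$, this expectation equals $\frac1E\sum_i \mathbb{E}_{Z\sim P_i}[\ell(h,Z)]=R(h)$, provided the per-pair normalisation in the definition of $R(h)$ is read consistently. I will state this identification explicitly, since the paper's notation for $\ell(h,Z_i)$ mixes block-level and pair-level losses. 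The event in \eqref{them1eq1} is then exactly $f-\mathbb{E}f\ge\varepsilon$.

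\textbf{Step 2: bounded differences.} Definition~\ref{defi1} says directly that for every $D$ and every $v\in[E]$, replacing the $v$-th block $Z_v$ by any admissible $Z'_v$ changes $f$ by at most $\beta$, since $|\hat R(h)-\hat R'(h)|\le\beta$. So $f$ has bounded differences $c_v=\beta$ for all $v$. McDiarmid's inequality then gives
\begin{align}
P_D\big[\hat R(h)-R(h)\ge\varepsilon\big]\le\exp\Big\{\frac{-2\varepsilon^2}{\sum_{v=1}^E c_v^2}\Big\}=\exp\Big\{\frac{-2\varepsilon^2}{E\beta^2}\Big\}. \nonumber
\end{align}

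\textbf{Step 3: pass to the stated form, which is the main obstacle.} The statement has $E\beta$ in the denominator, not $E\beta^2$. Going from McDiarmid to \eqref{them1eq1} requires $\beta^2\le\beta$, that is $\beta\le1$; then $-2\varepsilon^2/(E\beta^2)\le-2\varepsilon^2/(E\beta)$ and the stated bound follows by monotonicity of $\exp$. I would justify $\beta\le1$ from a normalised loss. If $\ell$ takes values in $[0,1]$ per pair, each block contributes at most $N/(EN)=1/E$ to $\hat R$, so the optimal stability constant is at most $1/E\le1$. If the paper intends unnormalised losses, the theorem should be read with the standing assumption $\beta\le1$, or equivalently with $E\beta^2$ as the sharper exponent. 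I expect this reconciliation to be the only delicate point; the concentration argument itself is routine.
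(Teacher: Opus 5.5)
Your route is the paper's route: view $\hat R(h,D)$ as a function of the $E$ independent domain blocks, read Definition~\ref{defi1} as a bounded-differences condition with $c_v=\beta$, and invoke McDiarmid. The one divergence is which form of McDiarmid is used. The paper's Lemma~\ref{applem1} has $\sum_{v}c_v$ in the denominator, which gives $E\beta$ immediately with no normalization. That is not McDiarmid's inequality; the correct denominator is $\sum_v c_v^2$, exactly as you wrote.

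So your Step 3 is not an artifact of your approach. The exponent $E\beta$ does not follow without a normalization, and the statement is in fact false without one. Rescaling $\ell\mapsto s\ell$ multiplies both $\beta$ and $\hat R(h)-R(h)$ by $s$, so the true tail probability at level $s\varepsilon$ is unchanged. The claimed right-hand side, however, becomes $\exp\{-2s\varepsilon^2/(E\beta)\}\to 0$. Concretely, take $E=N=1$ and a loss equal to $0$ or $s$ with probability $1/2$ each; the same construction works for any $E$ with all domains sharing this loss law and a larger $s$. Then $h$ is $s$-stable, $R(h)=s/2$, and $P_D[\hat R(h)-R(h)\ge s/2]=1/2$. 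The claimed bound at $\varepsilon=s/2$ is $e^{-s/2}$, which is below $1/2$ once $s>2\ln 2$. Your proposed reading, either a normalization or $E\beta^2$ in place of $E\beta$, is therefore the correct one. Your explicit identification of $\mathbb{E}_D[\hat R(h)]$ with $R(h)$ in Step 1, which the paper takes for granted, is also worth keeping.

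One tightening of Step 3. Every constant above the smallest admissible one also satisfies Definition~\ref{defi1}, so requiring the given $\beta$ to be at most $1$ is stronger than necessary. Instead, apply McDiarmid with the smallest admissible constant $\beta_0$. This is at most $M/E$ when $0\le\ell\le M$, hence at most $1/E$ for $[0,1]$-valued losses. Then use $\beta_0^2\le\beta_0\le\beta$, which yields the stated inequality for every valid $\beta$ under the single condition $M\le E$.

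Note also that the paper's downstream results are built on the $\sqrt{E\beta\ln(1/\delta)/2}$ form: Remark (a) after the theorem, and Theorems~\ref{theo2} and~\ref{theo4}. With the correct inequality, the deviation term is $\beta\sqrt{E\ln(1/\delta)/2}$ instead.
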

\begin{proof} 
Consider $\hat R(h, D)$ with respect to any hypothesis $h$. Since $h$ has $\beta$ error stability, the following holds, $\forall D \in {\cal Z}^E$, $\forall v \in [E]$, 
\begin{align}
\left| \hat R(h, D) - \hat R(h, D^v) \right| \le \beta. \nonumber 
\end{align} 
This satisfies the condition of Lemma~\ref{applem1}, i.e., 
\begin{align}
\sup_{D \in {\cal Z}^E, Z’_v \in {\cal Z}} \left| \hat R(h, D) - \hat R(h, D^v) \right| \le \beta, \;  \forall v \in [E]. \nonumber 
\end{align}
Then, we have 
\begin{align}
\sum_{v\in[E]}\sup_{D,Z'_v}\left| \hat R(h, D) - \hat R(h, D^v) \right| \le E\beta, \forall h \in {\cal H}, \nonumber 
\end{align}
completing the proof.  
\end{proof}
\noindent\textbf{Remarks}. (a) With probability at least $1-\delta$, it holds that $\epsilon_S(h) \le \sqrt{ E\beta \ln(1/\delta) / 2 }$, where $\epsilon_S(h):=R(h) - \hat R(h)$. (b) As expected, a more restrictive stability criterion yields a correspondingly tighter bound. This bound is a McDiarmid-type bound, which is more flexible and capable of estimating nonlinear statistics~\citep{b33}. When no further information is available, the convergence rate of this bound cannot be improved. Therefore, there is no need to derive a lower bound to establish its optimality. (c) This bound establishes the learning consistency between the expected risk and the empirical risk. 

Based on the above learning consistency bound, we observe that stability plays a critical role in learning performance, as the tightness of the bound is closely related to the stability of hypotheses. Namely, a smaller $\beta$ leads to a tighter bound on $\epsilon_S(h)$. Note that when $E$ is large while $\beta$ remains large, this learning bound can become loose or even vacuous, highlighting the necessity of achieving a small $\beta$. 

We then turn our attention to generalization and introduce the following expected risk of $h$ on the target domains, 
\begin{align}
R_T(h) \equiv R(h, D_t) = \frac{1}{E_t} \sum_{t=1}^{E_t}{ {\mathbb E}_{Z \sim P_t}[ \ell(h,Z) ] }.  \nonumber 
\end{align}
This target risk $R_T(h)$ is also defined as average expected loss of hypothesis $h$ over $E_t$ target domains. Accordingly, we establish the following generalization bound. 
\begin{theorem}[Generalization Bound] 
\label{theo2}
Let $R_T(h)$ denote the above definition of the expected risk of any $h \in {\cal H}$ on the target domains, where $h$ has $\beta$-stability. Assume that $0 \le \ell \le M$. Then, the following inequality holds with probability at least $1-\delta$, 
\begin{align}
\label{theo2eq1}
\epsilon_T(h) \le \sqrt{\frac{E\beta\ln(1/\delta)}{2}} + C\rho, 
\end{align}
where, $\epsilon_T(h):=R_T(h) - \hat R(h)$, $C = (2\sqrt{2}M) / EE_t$, and $\rho = \sum_t\sum_i d_{JS}(P_{t}, P_{i})$. Here, $d_{JS}(P,Q) = \sqrt{JS(P||Q)}$, where $JS(\cdot||\cdot)$ denotes the Jensen-Shannon (JS) divergence. 
\end{theorem}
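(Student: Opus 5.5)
The plan is to split the target gap into a source-side learning-consistency term and a domain-shift term:
\begin{align}
\epsilon_T(h) = R_T(h) - \hat R(h) = \big(R_T(h) - R(h)\big) + \big(R(h) - \hat R(h)\big) = \big(R_T(h) - R(h)\big) + \epsilon_S(h). \nonumber
\end{align}
For the second term I will invoke Theorem~\ref{theo1}, applied to the centered statistic in the reverse direction (equivalently, apply the same McDiarmid argument to $-\hat R(h)$, which has the same bounded differences $\beta$). Setting $\exp\{-2\varepsilon^2/(E\beta)\}=\delta$ then gives, with probability at least $1-\delta$, $\epsilon_S(h)\le\sqrt{E\beta\ln(1/\delta)/2}$, exactly as in Remark (a) after Theorem~\ref{theo1}. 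The first term $R_T(h)-R(h)$ is deterministic, so no union bound is needed and the probability level stays $1-\delta$.

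For the deterministic term I will expand both risks as averages and pair every target domain with every source domain:
\begin{align}
R_T(h)-R(h) = \frac{1}{EE_t}\sum_{t=1}^{E_t}\sum_{i=1}^{E}\Big({\mathbb E}_{P_t}[\ell(h,Z)]-{\mathbb E}_{P_i}[\ell(h,Z)]\Big). \nonumber
\end{align}
Since $0\le\ell\le M$, each summand is at most $M\,\|P_t-P_i\|_{TV}$, up to the convention factor, with $\|\cdot\|_{TV}$ defined as the sup over events, so that $|{\mathbb E}_P f-{\mathbb E}_Q f|\le M\|P-Q\|_{TV}$ for $0\le f\le M$. It then remains to bound the total variation by the JS distance.

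The key inequality is $\|P-Q\|_{TV}\le 2\sqrt{2}\,\sqrt{JS(P\|Q)}$ in the normalization matching $C$. I will obtain it from Pinsker's inequality applied to each KL term against the mixture $M_{PQ}=(P+Q)/2$. Because $\|P-M_{PQ}\|_{TV}=\tfrac12\|P-Q\|_{TV}$, Pinsker gives $\tfrac12\|P-Q\|_{TV}\le\sqrt{KL(P\|M_{PQ})/2}$, and likewise for $Q$. Averaging the two bounds and using concavity of the square root yields $\|P-Q\|_{TV}\le c\sqrt{JS(P\|Q)}$ for an absolute constant $c$.

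Summing over $(t,i)$ and dividing by $EE_t$ produces $C\rho$ with $C=2\sqrt{2}M/(EE_t)$. Adding this to the bound on $\epsilon_S(h)$ gives \eqref{theo2eq1}.

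The main obstacle is getting the constant $2\sqrt{2}$ exactly. That constant depends on whether total variation is taken as the $L^1$ norm or the sup over events, and on the logarithm base in JS. I would fix the $L^1$ convention and natural log, check that the chain gives at most $2\sqrt{2}M\,d_{JS}$ (a looser constant than optimal is acceptable, since the statement only needs an upper bound), and adjust the convention if needed.
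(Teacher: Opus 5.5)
Your proposal is correct and follows essentially the paper's proof. You split $\epsilon_T(h)$ into the deterministic shift term $R_T(h)-R(h)$, bounded pairwise by $2\sqrt{2}M\,d_{JS}(P_t,P_i)$ via total variation and Pinsker against the mixture (exactly the paper's Lemma~\ref{applem2}), and the term $R(h)-\hat R(h)$, bounded by Theorem~\ref{theo1}. Your constant worry resolves in your favor: the chain gives at most $2\sqrt{2}M\,d_{JS}$ under either convention, and even $\sqrt{2}M\,d_{JS}$ with the sup-over-events convention and $0\le\ell\le M$. Applying McDiarmid to $-\hat R(h)$ correctly handles the sign, which Theorem~\ref{theo1} states in the opposite direction.
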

\begin{proof}
According to Lemma~\ref{applem2}, for any $P_i$ and $Q_j$, we have 
\begin{align}
\left| {\mathbb E}_{P_i}[\ell] - {\mathbb E}_{Q_j}[\ell] \right| \le 2\sqrt{2}Md_{JS}(P_i,Q_j). \nonumber 
\end{align}
Then, extending this inequality to all source and target domains, we obtain 
\begin{align}
\left| \frac{1}{E_t}\sum_{t}^{E_t}{\mathbb E}_{P_t}[\ell] - \frac{1}{E}\sum_{i}^{E}{\mathbb E}_{P_i}[\ell] \right| \le C \sum_t^{E_t}{ \sum_i^E{ d_{JS}(P_t,P_i) } },\nonumber 
\end{align}
where $C = (2\sqrt{2}M) / EE_t$. Let $\rho = \sum_t\sum_i d_{JS}(P_{t}, P_{i})$. By the inequality $|a| - |b| \le |a-b|$, we have 
\begin{align}
R_T(h) - R(h) \le C\rho. \nonumber 
\end{align}
Then, the proof is completed by applying Theorem~\ref{theo1}. 
\end{proof}
\noindent\textbf{Remarks}. (a) The tightness of this generalization bound is still linked to the stability of $h$. Specifically, given $\rho$, a small $\beta$ lead to a tighter generalization bound. (b) This term $\rho$ captures a central challenge in DG, namely that the generalization bound is affected by the distribution discrepancy between the source and target domains. Unfortunately, there is nothing we can do about this term, as access to $D_t$ is unavailable during learning. (c) $d_{JS}(\cdot, \cdot)$ is a metric that satisfies symmetry and the triangle inequality. When the JS divergence is defined using the natural logarithm, $0 \le d_{JS}(\cdot, \cdot) \le \sqrt{\ln 2}$. 

Similarly, the above generalization bound demonstrates the critical role of $\beta$ in generalization. This is IMPORTANT because the stability of $h$ determines both learning and generalization. Once $\beta$ is obtained, we derive the corresponding learning and generalization bounds. Moreover, since $h$ is arbitrary, the hypothesis obtained by SRM also satisfies the above bounds on $\epsilon_S(h)$ and $\epsilon_T(h)$. We then conduct further investigations. 

The following two definitions are introduced as useful tools for our derivation: one is the $\sigma$-admissibility of $\ell$, and the other is optimal stability ($\beta^*$). 
\begin{definition}
\label{defi2}
A loss function $\ell$ defined on ${\cal H} \times {\cal Y}$ is $\sigma$-admissible with respect to ${\cal H}$ is the associated cost function $c$ is convex and the following condition holds, $\forall h,h' \in {\cal H}$, $\forall (X,Y) \in {\cal Z}$, 
\begin{align}
\left| \ell (h,(X,Y)) - \ell (h',(X,Y)) \right| \le \sigma \left| h(X) - h'(X) \right|. \nonumber
\end{align}
\end{definition}
\noindent\textbf{Remarks}. The $\sigma$-admissibility essentially implies that $\ell$ is $\sigma$-Lipschitz continuous. This establishes a relationship between the loss difference of two hypotheses and the distance between the corresponding hypotheses. 

\begin{definition}[$\beta^*$-stability]
Considering the functional $f(h) = | \hat R(h) - \hat R’(h) |$, let $h^* \in \arg\min_{h \in {\cal H}}{f(h)}$ and $h^*$ is non-zero. We say that $h^*$ achieves the minimum error stability $\beta^*$, i.e., 
\begin{align}
\left| \hat R(h^*) - \hat R’(h^*) \right| \le \beta^*. \nonumber 
\end{align}
\end{definition}
\noindent\textbf{Remarks}. (a) Note that $h^*$ is the minimizer of the functional $f(h)$, rather than of $\hat R(h)$ or $\hat R’(h)$, and that $h^*$ is not necessarily unique. (b) This non-uniqueness implies the existence of multiple hypotheses that satisfy this stability condition. 

We define $\Delta h = |h(X) - h^*(X)|, \; \forall h \in {\cal H}, \; \forall X \in {\cal X}$. Then, we have 
\begin{align}
\label{eq1}
\left| \hat R(h) - \hat R'(h) \right| \le \sigma(\Delta h_D + \Delta h_{D^v}) + \beta^*. 
\end{align}
Here, $\Delta h_D$ and $\Delta h_{D^v}$ are defined respectively as $|h(X) - h^*(X)|_D$ and $|h(X) - h^*(X)|_{D^v}$, where the subscripts $D$ and $D^v$ denote the sets over which the values of $X$ are taken. 
\begin{proof}
We omit the hat notation in $\hat R$ for convenience, i.e., $R$ denotes $\hat R$. Since, we have ${h^ * } \in \arg {\min _{h \in {\cal H}}}\left| {R(h) - R'(h)} \right|$.
Then, for any $h \in {\cal H}$, we have 
\begin{align}
\left| {R(h) - R'(h)} \right|& = \left| {R(h) + R({h^ * }) - R({h^ * }) + R'({h^ * }) - R'({h^ * }) + R'(h)} \right| \nonumber \\
&\le \left| {R(h) - R({h^ * })} \right| + \left| {R'(h) - R'({h^ * })} \right| + \left| {R({h^ * }) - R'({h^ * })} \right| \nonumber \\
& \le \left| {R(h) - R({h^ * })} \right| + \left| {R'(h) - R'({h^ * })} \right| + {\beta ^ * } \nonumber \\
& \le \sigma {\left| {h(X) - {h^ * }(X)} \right|_D} + \sigma {\left| {h(X) - {h^ * }(X)} \right|_{{D^v}}} + {\beta ^ * } \nonumber 
\end{align} 
The first inequality holds by the triangle inequality, the second holds under the definition  that $\left| {R({h^ * }) - R'({h^ * })} \right| \le {\beta ^ * }$, and the last holds due to the $\sigma$-admissibility of the loss functions. 
\end{proof} 

The Structural Risk Minimization (SRM) framework for DG is subsequently formulated as $\hat R_r(h) \equiv \hat R_r(h, D, \lambda)$, where 
\begin{align}
\hat R_r(h, D, \lambda) &= \hat R(h, D) + \lambda {\cal N}(h) = \frac{1}{EN}\sum_{i=1}^E{\ell (h,{Z_i})} + \lambda {\cal N}(h). \nonumber
\end{align} 
Here, $\lambda$ is a penalty factor, and ${\cal N}(h)$ represents any regularization for $h$. When using the dataset $D^v$, we obtain $\hat R’_r(h) \equiv \hat R’_r(h,D^v,\lambda )$. 

Before presenting the main results, we establish the following result with respect to the regularizer ${\cal N}(\cdot)$. 
\begin{theorem}
\label{theo3}
Let $\ell$ be $\sigma$-admissible with respect to ${\cal H}$, let ${\cal N}$ be a functional defined on ${\cal H}$, and let $h^*$ has $\beta^*$-stability. Assuming that $\nabla \hat R({h^ * }) \ge 0$, for any $t \in [0, 1]$, any $h \in {\cal H}$, $\hat R_r(h^ *)$, and $\hat R’_r(h^ *)$, we have 
\begin{align} 
\label{lemm1eq1} 
{\cal N}({h^ * }) - {\cal N}({h^ * } + t\Delta h) \le \frac{{\sigma t}}{{2\lambda EN}}(\Delta {h_D} + \Delta {h_{{D^v}}}). 
\end{align}
\end{theorem}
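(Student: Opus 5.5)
This mirrors the classical argument of Bousquet and Elisseeff for regularized algorithms, adapted to the paper's notation. The working assumption is that $h^*$ plays the role of the minimizer of the regularized objective on $D$ and on $D^v$. Concretely, $h^*$ minimizes both $\hat R_r$ and $\hat R'_r$, or at least the sum $\hat R_r + \hat R'_r$. The first-order hypothesis $\nabla \hat R(h^*) \ge 0$ is how I will justify this: it says that moving from $h^*$ along a direction cannot decrease the empirical risk to first order. The bound then follows by comparing the objective at $h^*$ with its value at a perturbed point.

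\textbf{Step 1: optimality comparison.} Write $h_t = h^* + t\Delta h$ for $t \in [0,1]$, with $\Delta h$ read as the direction $h - h^*$. By optimality of $h^*$ for the regularized objectives on $D$ and on $D^v$,
\begin{align}
\hat R_r(h^*) - \hat R_r(h_t) \le 0, \qquad \hat R'_r(h^*) - \hat R'_r(h_t) \le 0. \nonumber
\end{align}
Adding the two inequalities and expanding $\hat R_r = \hat R + \lambda{\cal N}$ gives
\begin{align}
2\lambda\left({\cal N}(h^*) - {\cal N}(h_t)\right) \le \left[\hat R(h_t) - \hat R(h^*)\right] + \left[\hat R'(h_t) - \hat R'(h^*)\right]. \nonumber
\end{align}

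\textbf{Step 2: bounding the risk differences.} Convexity of $c$, from $\sigma$-admissibility in Definition~\ref{defi2}, gives $\hat R(h_t) \le (1-t)\hat R(h^*) + t\hat R(h)$. Hence $\hat R(h_t) - \hat R(h^*) \le t(\hat R(h) - \hat R(h^*))$, and the same holds on $D^v$. I then apply the Lipschitz condition of Definition~\ref{defi2} term by term. This bounds the difference on $D$ by $\frac{\sigma t}{EN}\Delta h_D$, where the inequality is summed over the $EN$ pairs and $\Delta h_D$ is understood as the summed deviation over $D$. The same step gives $\frac{\sigma t}{EN}\Delta h_{D^v}$ on $D^v$. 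Dividing by $2\lambda$ then yields \eqref{lemm1eq1} exactly, with the constant $\frac{\sigma t}{2\lambda EN}$.

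\textbf{Main obstacle.} The statement defines $h^*$ only as a minimizer of $|\hat R - \hat R'|$, not of $\hat R_r$, so the optimality used in Step 1 must be supplied. I would first try to extract it from $\nabla \hat R(h^*) \ge 0$. Combined with convexity, this gives $\hat R(h_t) \ge \hat R(h^*)$ along the segment. That inequality has the wrong sign to bound ${\cal N}$, so the argument must explicitly invoke the statement's quantification over $\hat R_r(h^*)$ and $\hat R'_r(h^*)$. I would read that quantification as taking $h^*$ to be a regularized minimizer on both datasets, and state this interpretation clearly.

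A second, smaller point is normalization. I need to make sure $\Delta h_D$ denotes the sum of $|h(x) - h^*(x)|$ over the samples. Under that reading the $1/(EN)$ factor survives as written; otherwise the constant changes.
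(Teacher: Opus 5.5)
The gap is in Step~1. The comparisons $\hat R_r(h^*)\le\hat R_r(h_t)$ and $\hat R'_r(h^*)\le\hat R'_r(h_t)$, or even just their sum, are not hypotheses of the theorem. Here $h^*$ is only a minimizer of $f(h)=|\hat R(h)-\hat R'(h)|$ with $\nabla\hat R(h^*)\ge 0$. Reading the phrase ``any $\hat R_r(h^*)$, and $\hat R'_r(h^*)$'' as ``$h^*$ minimizes the regularized objectives'' imports exactly the assumption the paper disclaims. Remark~(b) after the theorem says $h^*$ minimizes neither $\hat R_r$ nor $\hat R'_r$. The Discussion also presents escaping the SRM-minimizer restriction of Bousquet--Elisseeff's Lemma~20 as the reason for introducing $\beta^*$. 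Your opening justification via $\nabla\hat R(h^*)\ge0$ fails for the reason you give yourself: with convexity it only yields $\hat R(h_t)\ge\hat R(h^*)$, which says nothing about ${\cal N}$. Under the added hypothesis, your Steps~1--2 are correct and coincide with the last two lines of the paper's proof. (The Lipschitz step does not even need convexity, since $h_t-h^*=t(h-h^*)$.) But they prove a different, minimizer-based statement, and you should present it as that. Use the sum condition you already mention, namely that $h^*$ minimizes $\hat R_r+\hat R'_r$ on the segment. A single $h^*$ minimizing both $\hat R_r$ and $\hat R'_r$ generically does not exist when $Z'_v\neq Z_v$ and ${\cal N}$ is strictly convex.

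The paper's proof tries to supply exactly the inequality you assume. From convexity and $\nabla\hat R(h^*)\ge0$ it argues that $\hat R(h_t)\hat R'(h_t)\ge\hat R(h^*)\hat R'(h^*)$. It combines this with $f(h^*)\le f(h_t)$ through the identity $(R_\Delta+R'_\Delta)^2-(R+R')^2=[(R_\Delta-R'_\Delta)^2-(R-R')^2]+4(R_\Delta R'_\Delta-RR')$, obtaining $\hat R(h^*)+\hat R'(h^*)\le\hat R(h_t)+\hat R'(h_t)$. It then simply replaces $\hat R$ by $\hat R_r$ on both sides. That replacement is the wrong-sign step you flagged: it needs ${\cal N}(h^*)\le{\cal N}(h_t)$, which would make the theorem trivial because its right-hand side is nonnegative. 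So the paper does not bridge your gap, and no argument can, because the statement is false as written. No hypothesis involves ${\cal N}$, so in any admissible configuration, setting ${\cal N}(h^*)=K$ and ${\cal N}\equiv0$ elsewhere breaks the inequality for any $t>0$ and $h\neq h^*$ once $K$ is large. It also fails for ${\cal N}=\|\cdot\|_k^2$ under your reading of $\Delta h$. Take the hinge loss ($\sigma=1$) and some $h_0$ with $yh_0(x)>1$ on every pair of $D$ and $D^v$. Then $h^*=ch_0$ with $c\ge1$ has $\hat R=\hat R'=0$ on a neighborhood, so $\beta^*=0$ and the gradient is zero. For $h=h_0$ and $t=1$, the left side is $(c^2-1)\|h_0\|_k^2$, while the right side grows only linearly in $c$. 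Your ``Main obstacle'' diagnosis is therefore the right conclusion. Your normalization remark is also right, but the summed reading of $\Delta h_D$ it requires is incompatible with the pointwise bound $\Delta h_D\le\kappa\|\Delta h\|_k$ used in the proof of Theorem~\ref{theo4}.
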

\begin{proof}
We omit the hat notation in $\hat R$ for convenience, i.e., $R$ denotes $\hat R$. Since $ {h^ * } \in \arg {\min _{h \in {\cal H}}}\left| {R(h) - R'(h)} \right|$, for any $h \in {\cal H}$, we have 
\begin{align}
\left| {R({h^ * }) - R'({h^ * })} \right| \le \left| {R({h^ * } + t\Delta h) - R'({h^ * } + t\Delta h)} \right|. \nonumber 
\end{align}
Here, $ ({h^ * } + t\Delta h) (X) \ne {h^ * } (X)$ for any $t > 0$. Then, we have 
\begin{align}
{(R({h^ * }) - R'({h^ * }))^2} \le {(R({h^ * } + t\Delta h) - R'({h^ * } + t\Delta h))^2}. \nonumber 
\end{align}
Let $\Phi (h) = \left| {R(h) - R'(h)} \right|$, we have $0 \in \nabla \Phi ({h^ * })$, which implies that $ \exists g \in \nabla R({h^ * }) \cap \nabla R'({h^ * })$. Note that here $0$ and $g$ belong to the hypothesis class rather than being scalars.  

Now, let $ \phi (t) = R({h^ * } + t\Delta h)$ and $ \phi '(t) = R'({h^ * } + t\Delta h)$. Then, we have 
\begin{align}
{\nabla _t}\phi (t) &= \nabla R({h^ * } + t\Delta h)\Delta h, \nonumber \\
{\nabla _t}\phi '(t) &= \nabla R'({h^ * } + t\Delta h)\Delta h. \nonumber
\end{align}
Consider $\varphi (t) = \phi (t)\phi '(t)$, we have 
\begin{align}
{\nabla _t}\varphi (t) &= \nabla R({h^ * } + t\Delta h)\Delta h\phi '(t) + \nabla R'({h^ * } + t\Delta h)\Delta h\phi (t), \nonumber \\
{\nabla _t}\varphi (0) &= \nabla R({h^ * })\Delta h\phi '(0) + \nabla R'({h^ * })\Delta h\phi (0). \nonumber 
\end{align}
Since $R$ is convex (see Definition~\ref{defi2}), its gradient is a monotone mapping, so we have $ \nabla R({h^ * } + t\Delta h) \ge \nabla R({h^ * })$ for any $t \ge 0$. Moreover, we assume that $\nabla R({h^ * }) \ge 0$. Then, we obtain ${\nabla _t}\varphi (t) \ge {\nabla _t}\varphi (0)$. This implies that 
\begin{align}
R({h^ * } + t\Delta h)R'({h^ * } + t\Delta h) \ge R({h^ * })R'({h^ * }). \nonumber 
\end{align}
Then, we compute
\begin{align}
&{(R({h^ * } + t\Delta h) + R'({h^ * } + t\Delta h))^2} - {(R({h^ * }) + R'({h^ * }))^2} \nonumber \\
=& (R_\Delta ^2 + 2{R_\Delta }{{R'}_\Delta } + {({{R'}_\Delta })^2}) - ({R^2} + 2RR' + {(R')^2}) \nonumber \\
=& (R_\Delta ^2 + 2{R_\Delta }{{R'}_\Delta } + {({{R'}_\Delta })^2} - 2{R_\Delta }{{R'}_\Delta } + 2{R_\Delta }{{R'}_\Delta }) \nonumber \\
&- ({R^2} + 2RR' + {(R')^2} - 2RR' + 2RR') \nonumber \\
=& ({({R_\Delta } - {{R'}_\Delta })^2} + 4{R_\Delta }{{R'}_\Delta }) - ({(R - R')^2} + 4RR') \nonumber \\
=& ({({R_\Delta } - {{R'}_\Delta })^2} - {(R - R')^2}) + 4({R_\Delta }{{R'}_\Delta } - RR'). \nonumber
\end{align}
Based on the previous derivations, we obtain 
\begin{align}
{(R({h^ * }) + R'({h^ * }))^2} &\le {(R({h^ * } + t\Delta h) + R'({h^ * } + t\Delta h))^2} \nonumber \\
R({h^ * }) + R'({h^ * }) &\le R({h^ * } + t\Delta h) + R'({h^ * } + t\Delta h).  \nonumber 
\end{align}
Then, we replace $R(h^*)$ and $R'(h^*)$ with $R_r(h^*)$ and $R'_r(h^*)$, respectively, and obtain 
\begin{align}
{R_r}({h^ * }) + {{R'}_r}({h^ * }) \le {R_r}({h^ * } + t\Delta h) + {{R'}_r}({h^ * } + t\Delta h). \nonumber 
\end{align}
Consequently, we have 
\begin{align}
2\lambda ({\cal N}({h^ * }) - {\cal N}({h^ * } + t\Delta h)) \le (R({h^ * } + t\Delta h) - R({h^ * })) + (R'({h^ * } + t\Delta h) - R'({h^ * })) \nonumber 
\end{align}
Finally, based on the $\sigma$-admissibility of the loss functions and the definition of $R$, we obtain 
\begin{align}
{\cal N}({h^ * }) - {\cal N}({h^ * } + t\Delta h) \le \frac{{\sigma t}}{{2\lambda EN}}(\Delta {h_D} + \Delta {h_{{D^v}}}). \nonumber 
\end{align}
\end{proof}
\noindent\textbf{Remarks}. (a) This result leads to the regularizations associated with the $\sigma$-admissibility of loss functions. (b) Similarly, $h^*$ is not the minimizer of either $\hat R_r(h)$ or $\hat R’_r(h)$. (c) This theorem considers only minimizers $h^*$ satisfying $\nabla \hat R({h^ * }) \ge 0$. 

Next, we impose an assumption on ${\cal H}$ to ensure that ${\cal N}(\cdot)$ is well-defined. Specifically, we consider regularization in a reproducing kernel Hilbert space (RKHS). An RKHS is chosen to ensure efficient kernel computation and a complete, stable function space. Accordingly, we define 
\begin{align}
{\cal N}(h) = \left\| h \right\|_k^2, \nonumber
\end{align}
where $k$ refers to the kernel. Typically, we state that ${\cal H}_k$ denotes the norm in an RKHS, but for simplicity, we use $k$ here. Based on the fundamental property (reproducing) of an RKHS ${\cal H}$, we have 
\begin{align}
\forall h \in {\cal H}, \forall X \in {\cal X},h(X) = \left\langle {h,k(X, \cdot )} \right\rangle . \nonumber
\end{align}
Then, according to Cauchy-Schwarz inequality, we have 
\begin{align}
\label{eq2}
\forall h \in {\cal H},\forall X \in {\cal X}, \left| h(X) \right| \le \left\| h \right\|_k \sqrt {k(X,X)}. 
\end{align} 

With all the previous preparations, we finally present the following result for our main goal. 
\begin{theorem}
\label{theo4}
Assume that ${\cal H}$ is a reproducing kernel Hilbert space with kernel $k$ such that $\forall X \in {\cal X},\sqrt {k(X,X)} \le \kappa \le \infty $. Let $\ell$ be $\sigma$-admissible with respect to ${\cal H}$, and assume that $0 \le \ell \le M$. Any hypothesis $h$ achieved by 
\begin{align}
\label{theo4eq1}
\mathop{\min}_{h \in {\cal H}} \hat R(h) + \lambda\left\|h\right\|_k^2 , 
\end{align}
has error stability $\beta$ respect to $\ell$ with 
\begin{align}
\beta \le \frac{2 \sigma^2 \kappa^2}{\lambda EN}+\beta^* . \nonumber
\end{align} 
Furthermore, with probability al least $1-\delta$, the learning consistency bound holds: 
\begin{align}
\epsilon_S(h) \le \sigma\kappa \sqrt{\frac{\ln(1/\delta)}{\lambda N}} + \sqrt{\frac{E \beta^* \ln(1/\delta)}{2}}, \nonumber 
\end{align}
and the generalization bound is given by 
\begin{align}
\epsilon_T(h) \le \sigma\kappa \sqrt{\frac{\ln(1/\delta)}{\lambda N}} + \sqrt{\frac{E \beta^* \ln(1/\delta)}{2}} + C\rho. \nonumber 
\end{align}
\end{theorem}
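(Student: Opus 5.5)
The plan is to follow the classical Bousquet–Elisseeff route for RKHS regularization, adapted to the domain-replacement setting. The key tools are inequality~\eqref{eq1}, Theorem~\ref{theo3}, and the reproducing bound~\eqref{eq2}. Theorems~\ref{theo1} and~\ref{theo2} then convert the resulting $\beta$ into the two probabilistic bounds. Throughout, $h$ denotes a minimizer of~\eqref{theo4eq1}, $h^*$ the $\beta^*$-stable hypothesis, and $\Delta h = h - h^*$ (used as a function when perturbing in $\mathcal H$).

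\emph{Step 1 (quadratic identity).} With ${\cal N}(h)=\|h\|_k^2$, I expand
\begin{align}
\|h^*\|_k^2 - \|h^*+t\Delta h\|_k^2 = -2t\langle h^*,\Delta h\rangle - t^2\|\Delta h\|_k^2 . \nonumber
\end{align}
I then write the symmetric version obtained by perturbing $h$ toward $h^*$, namely $\|h\|_k^2-\|h - t\Delta h\|_k^2$. That version can be controlled because $h$ minimizes $\hat R_r$, using convexity of $c$ as in the classical argument. Adding it to the inequality of Theorem~\ref{theo3} makes the cross terms cancel, and taking $t=1/2$ leaves the lower bound $2t(1-t)\|\Delta h\|_k^2=\|\Delta h\|_k^2/2$. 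This yields
\begin{align}
\|\Delta h\|_k^2 \le \frac{c\,\sigma}{\lambda EN}\,(\Delta h_D + \Delta h_{D^v}) \nonumber
\end{align}
for a small constant $c$.

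\emph{Step 2 (close the loop).} By~\eqref{eq2}, $\Delta h_D,\Delta h_{D^v}\le \kappa\|\Delta h\|_k$. Dividing through gives $\|\Delta h\|_k\le 2c\sigma\kappa/(\lambda EN)$, and hence $\Delta h_D+\Delta h_{D^v}\le 4c\sigma\kappa^2/(\lambda EN)$. Substituting into~\eqref{eq1} gives
\begin{align}
|\hat R(h)-\hat R'(h)| \le \frac{2\sigma^2\kappa^2}{\lambda EN}+\beta^*, \nonumber
\end{align}
after matching constants (the constant bookkeeping is where I must be careful to obtain exactly $2$).

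\emph{Step 3 (probabilistic bounds).} I plug this $\beta$ into the remark after Theorem~\ref{theo1}, which gives $\epsilon_S(h)\le\sqrt{E\beta\ln(1/\delta)/2}$. Using $\sqrt{a+b}\le\sqrt a+\sqrt b$, the right-hand side splits as
\begin{align}
\sqrt{\frac{E\ln(1/\delta)}{2}\cdot\frac{2\sigma^2\kappa^2}{\lambda EN}}+\sqrt{\frac{E\beta^*\ln(1/\delta)}{2}} = \sigma\kappa\sqrt{\frac{\ln(1/\delta)}{\lambda N}}+\sqrt{\frac{E\beta^*\ln(1/\delta)}{2}}, \nonumber
\end{align}
so the factor $E$ cancels in the first term exactly as claimed. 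The generalization bound then follows verbatim from Theorem~\ref{theo2}, which adds $C\rho$ using $0\le\ell\le M$.

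\emph{Main obstacle.} The delicate step is Step 1. Theorem~\ref{theo3} only gives a one-sided inequality at $h^*$, and $h^*$ is not a minimizer of $\hat R_r$, so obtaining a coercive $\|\Delta h\|_k^2$ term needs the complementary inequality from the optimality of $h$. My first attempt is to apply convexity of $\hat R$ along the segment between $h$ and $h^*$. The hypothesis $\nabla\hat R(h^*)\ge0$ then provides the sign needed to bound the loss increments by $\sigma t(\Delta h_D+\Delta h_{D^v})/(EN)$. If that fails, a fallback is to bound $\|\Delta h\|_k\le\|h\|_k+\|h^*\|_k$ directly. That fallback would only produce a weaker constant in $\beta$.
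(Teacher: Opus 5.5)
\textbf{Verdict: genuine gap in Step 1.} Your Steps 2 and 3 match the paper's proof: the reproducing bound~\eqref{eq2} with $\kappa$, Inequality~\eqref{eq1}, Theorem~\ref{theo1} split via $\sqrt{a+b}\le\sqrt a+\sqrt b$, and Theorem~\ref{theo2} for the $C\rho$ term. Step 1 is where you depart. The paper never uses the minimality of $h$. It applies Theorem~\ref{theo3} with ${\cal N}=\|\cdot\|_k^2$, reads the left-hand side as $t\|\Delta h\|_k^2$, and divides by $t$. Your expansion $-2t\langle h^*,\Delta h\rangle-t^2\|\Delta h\|_k^2$ shows that this reading is not an identity. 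So you are right that Theorem~\ref{theo3} alone yields no coercive term. However, the complementary inequality you add does not supply one at the required rate.

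Since $h-t\Delta h=(1-t)h+th^*$, minimality of $h$ plus convexity give only $\lambda(\|h\|_k^2-\|h-t\Delta h\|_k^2)\le t(\hat R(h^*)-\hat R(h))$.
\begin{itemize}
\item \emph{No cancellation.} In Bousquet--Elisseeff this residual is cancelled by the mirror-image convexity term of the \emph{second} regularized minimizer, on the part of the empirical risk shared by both samples. Only the loss at the replaced example survives, and that is the sole source of the $1/m$ factor. Here the other endpoint $h^*$ minimizes $|\hat R-\hat R'|$, a criterion not linked to $\hat R_r$. Theorem~\ref{theo3}, as you invoke it, is also already in bounded form, so nothing cancels.
\item \emph{No rate.} The residual is a difference of empirical risks over all $EN$ points, which admissibility bounds only by $\sigma\kappa\|\Delta h\|_k$. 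Adding it at $t=1/2$ gives $\|\Delta h\|_k\le\sigma\kappa(1+1/(EN))/\lambda$, which does not decay in $EN$. Bounding a whole-sample increment by $\sigma t\,\Delta h_D/(EN)$, as you propose, is valid only if $\Delta h_D$ is a sum over the $EN$ points of $D$. In that case $\Delta h_D\le\kappa\|\Delta h\|_k$ in your Step 2 is off by a factor $EN$.
\item \emph{No sign.} $h$ minimizes the regularized risk rather than $\hat R$, so $\hat R(h)<\hat R(h^*)$ is possible. The condition $\nabla\hat R(h^*)\ge0$ (a hypothesis of Theorem~\ref{theo3} that Theorem~\ref{theo4} does not list) gives $\hat R(h)\ge\hat R(h^*)$ only through $\langle\nabla\hat R(h^*),h-h^*\rangle\ge0$. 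It does not guarantee this when $h-h^*$ changes sign. Switching to the paper's $\Delta h=|h-h^*|$ does not help, because then $h^*+\Delta h\ne h$ and your identity $2t(1-t)\|\Delta h\|_k^2$ breaks.
\end{itemize}

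What your route actually needs is the extra assumption $\hat R(h^*)\le\hat R(h)$. Granting it, $t=1/2$ gives exactly the paper's $\|\Delta h\|_k^2\le\frac{\sigma}{2\lambda EN}(\Delta h_D+\Delta h_{D^v})$, so $c=1/2$ and the rest follows. Finally, the fallback $\|\Delta h\|_k\le\|h\|_k+\|h^*\|_k$ is not merely a worse constant. It carries no $1/(\lambda EN)$ factor and $\|h^*\|_k$ is uncontrolled, so $\beta$ would not decay and the cancellation of $E$ in Step 3 would be lost.
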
 
\begin{proof}
Since ${\cal N}(h) = \left\| h \right\|_k^2$, based on Theorem~\ref{theo3}, we have 
\begin{align}
t\left\| {\Delta h} \right\|_k^2 \le \frac{{\sigma t}}{{2\lambda EN}}(\Delta {h_D} + \Delta {h_{{D^v}}}). \nonumber 
\end{align}
According to the assumption that $\forall X \in {\cal X},\sqrt {k(X,X)} \le \kappa \le \infty$ and Inequality~\eqref{eq2}, we have 
\begin{align}
\Delta {h_D} &\le {\left\| {\Delta h} \right\|_k}\sqrt {k(X,X)}  \le {\left\| {\Delta h} \right\|_k}\kappa, \nonumber \\
\Delta {h_{{D^v}}} &\le {\left\| {\Delta h} \right\|_k}\kappa . \nonumber
\end{align}
Consequently, we obtain 
\begin{align}
\left\| {\Delta h} \right\|_k^2 &\le \frac{\sigma }{{2\lambda EN}}2{\left\| {\Delta h} \right\|_k}\kappa  = \frac{{\sigma {{\left\| {\Delta h} \right\|}_k}\kappa }}{{\lambda EN}}, \nonumber \\
{\left\| {\Delta h} \right\|_k} &\le \frac{{\sigma \kappa }}{{\lambda EN}}\nonumber
\end{align}
Moreover, according to Inequality~\eqref{eq1}, and by taking $\beta  \le \sigma (\Delta {h_D} + \Delta {h_{{D^v}}}) + {\beta ^ * }$, we have 
\begin{align}
\beta  \le 2\sigma {\left\| {\Delta h} \right\|_k}\kappa  + {\beta ^ * }. \nonumber 
\end{align}
By Theorem~\ref{theo1}, we have 
\begin{align}
R(h) - \hat R(h) &\le \sqrt{\frac{E\beta\ln}{2}} \le \sqrt{ \frac{E}{2} \left( \frac{2\sigma^2\kappa^2}{\lambda EN} + \beta^* \right) \ln } \nonumber \\
& \le  \sigma\kappa \sqrt{\frac{\ln}{\lambda N}} + \sqrt{\frac{E \beta^* \ln}{2}}.  \nonumber 
\end{align}
Then, by Theorem~\ref{theo2}, we have 
\begin{align}
R_T(h) \le \hat R(h) + \sqrt{\frac{E\beta\ln(1/\delta)}{2}} + C\rho, \nonumber 
\end{align}
completing the proof. 
\end{proof}
\noindent\textbf{Remarks}. (a) Up to an intercept, $\beta$ is inversely proportional to the product of the penalty factor and the total number of data samples. As $\lambda EN \to \infty$, the bound on $\beta$ becomes tight when $\beta^*$ approaches zero. (b) The tightness of the learning and generalization bounds is now linked to the number of data samples within each domain. Namely, the main order of the above bounds is ${\cal O}(1/\sqrt{\lambda N})$. (c) Assuming $\beta^* \to 0$, the term involving $\beta^*$ in the above bounds can be regarded as a bias term that is typically negligible. 

Now, we have established stability, learning, and generalization bounds for the hypothesis $h$ learned by SRM in DG. By the theorem, for fixed $\sigma$, $\kappa$, and $E$, the tightness of all the above bounds is determined by the parameters $\lambda$ and $N$. Specifically, a sufficiently large $\lambda N$ theoretically leads to a smaller $\beta$, which in turn yields better learning consistency and generalization performance.

\subsection{Application} 
\label{sec3.3}
Still, one step remains before applying our theoretical results to practical applications. In most deep learning scenarios, the hypothesis space is assumed to be a Euclidean space. In the previous subsection, we presented the stability, learning, and generalization bounds for SRM under the assumption of an RKHS. For a kernel-induced RKHS, the function norm in the RKHS can be transformed into the Euclidean $L_2(\mu)$ norm in the corresponding feature space through finite-dimensional feature representations or kernel feature mappings. This property provides a theoretical foundation for applying RKHS-based analysis to models defined in Euclidean spaces. The detailed derivation is omitted for brevity. We then consider the following SRM framework:
\begin{align}
\hat R_r(h) = \hat R(h) + \lambda \| h \|_2^2, \nonumber
\end{align}
where the hypothesis $h$ is parameterized by $\theta$, i.e., $h = h_{\theta}$. Correspondingly, the learning objective is 
\begin{align}
\label{eq3}
\mathop{\min}_{h \in {\cal H}} \hat R(h) + \lambda\left\|h\right\|_2^2. 
\end{align}
Note that, for simplicity, we directly use the above Lagrangian formulation as the learning objective instead of the original constrained optimization problem. 

We further assume that, for a finite-dimensional RKHS, the induced kernel is normalized, i.e., 
\begin{align}
k(X,X) \le 1,\quad \forall X \in {\cal X}, \nonumber 
\end{align}
which yields $\kappa=1$. This assumption is equivalent to bounding the norm of the induced feature representation and naturally extends to Euclidean-space models, including deep neural networks, through their learned feature embeddings. 

Accordingly, under the above assumption and settings of Theorem~\ref{theo4}, any hypothesis $h$ achieved by learning objective~\eqref{eq3} has error stability $\beta$ respect to $\ell$ with 
\begin{align}
\beta \le \frac{2 \sigma^2}{\lambda E N}+\beta^* . \nonumber
\end{align} 
Furthermore, with probability al least $1-\delta$, the learning consistency bound holds: 
\begin{align}
\epsilon_S(h) \le \sigma \sqrt{\frac{\ln(1/\delta)}{\lambda N}} + \sqrt{\frac{E \beta^* \ln(1/\delta)}{2}}, \nonumber 
\end{align}
and the generalization bound is given by 
\begin{align}
\epsilon_T(h) \le \sigma \sqrt{\frac{\ln(1/\delta)}{\lambda N}} + \sqrt{\frac{E \beta^* \ln(1/\delta)}{2}} + C\rho. \nonumber 
\end{align}

In the above, we only consider the $L_2$-norm. However, in SRM, the $L_1(\mu)$-norm is also commonly considered to promote sparsity. Since there is no direct correspondence between $\| h \|_k^2$ and the $L_1$-norm, we instead consider the following learning objective to promote sparsity, 
\begin{align}
\label{eq4}
\mathop{\min}_{h \in {\cal H}} \hat R(h) +\frac{\lambda}{E} \sum_{i =1}^E \left\|h_i\right\|_2^2. 
\end{align}
The above implies that ${\cal N}(h) = \sum {\| h_i\|}$, and we still obtain the stability bound  
\begin{align}
\beta \le \frac{2 \sigma^2}{\lambda E N}+\beta^* , \nonumber
\end{align}
which yields the same learning and generalization bounds as those for the learning objective~\eqref{eq3}. Here, $h_i$ denotes the parameters of $h$ associated with domain $i$. 

Although the parameters are identical across all domains during the learning process, the summation over domains is introduced for the purpose of sparsity regularization. In essence, $\| h \|_1 = \sum{ | \theta_i | }$, i.e., the sum of the absolute values of all weights in $h$. Similarly, $\sum{\| h_i\|_2^2}$ can also be viewed as a summation over the parameters of $h$, except that each summand is replaced by $\| h_i\|_2^2$, with the summation taken over the $E$ domains. This formulation promotes domain-level sparsity, i.e., inter-domain sparsity, while encouraging smoothness within each domain. Such a consideration is analogous to the ideas underlying LASSO and Group LASSO; further details are omitted here. Additionally, learning objectives~\eqref{eq3} and \eqref{eq4} are strongly convex. Therefore, under gradient-based optimization methods, the optimization converges Q-linearly.

\subsection{Discussion}
\label{sec3.4} 

\noindent\textbf{Stability}. We provide a discussion of Definition~\ref{defi1} and the definitions in~\citep{b30}. The stability in~\citep{b30} measures the difference between the expected losses of two models $h_1$ and $h_2$ trained on $D$ and $D^v$, respectively, thereby evaluating the sensitivity of the learning algorithm to changes in training data. In contrast, our stability measures the difference between the empirical risks of the same hypothesis $h$ evaluated on $D$ and $D^v$. It assesses whether the hypothesis is sensitive to source domain changes. A larger $\beta$ indicates weaker stability. We consider empirical risk instead of expected risk since it is directly computable in practice, making our measure suitable for evaluating model stability in DG.

\noindent\textbf{$\beta^*$-stability}. We may avoid considering $\beta^*$-stability, as well as $h^*$. By Lemma 20 in \citep{b30}, some results regarding ${\cal N}(\cdot)$ associated with the minimizer of SRM can be established, thereby restricting all subsequent theorems to this minimizer. However, by defining $\beta^*$-stability, our theory no longer unnecessarily depends on the restriction to the SRM minimizer. Therefore, in essence, $h^*$ relaxes the restriction that the learning and generalization bounds must be based on the SRM minimizer, resulting in a milder condition for DG scenarios. This serves as a significant novelty and distinguishes our work from theirs.

\noindent\textbf{$\sigma$ and $M$}. One may note that, throughout our main results, we use the notation $\sigma$ to denote the Lipschitz constant and $M$ to denote the uniform bound of the loss function. This is because we do not specify the exact loss formulation or the specific task. Here, we present the following examples for classification and regression tasks, whose derivations rely on Lemma~3 provided in the supplemental material. If ${\cal Y} \in \{-1, 1\}$, and considering the following loss function 
\begin{align}
\ell(f, (x,y))=
\begin{cases} 
1-yf(x), & if \; 1-yf(x)\ge 0 \\
0, & otherwise 
\end{cases}, \nonumber 
\end{align}
we have $\sigma = 1$ and $M=1$. If ${\cal Y} \in \{0, B\}$ and considering the following loss function 
\begin{align}
\ell(f, (x,y)) = (f(x)-y)^2, \nonumber 
\end{align}
we have $\sigma = 2B$ and $M = B$. Other examples can be found in~\citep{b30}.

\noindent\textbf{Previous works}. Recently, under the assumption of sufficient data samples, many studies have developed theoretical analyses of learning under distribution shifts. To the best of our knowledge, our work is the first to investigate small-sample learning in DG from a theoretical perspective. Therefore, a direct comparison with these works is not meaningful, and we only provide some related discussions. One well-known work is presented by~\cite{b34}, which has motivated many subsequent studies on representation learning. Their work is elegant and introduces a challenge term in the upper bound that is similar to our $\rho$. \cite{b36} presented an upper bound for their learning method; however, the connection between the hyperparameters of the learning algorithm and the derived bound was not established. \cite{b38} provides insights into how distribution shift itself affects learning, but their upper bound relies on a strong assumption regarding distribution divergence. Besides upper bounds, a lower bound for DG was presented by~\cite{b5} to explain the phenomenon observed in~\citep{b37}; however, it requires an infinite number of data samples in each domain.

\noindent\textbf{Limitations}. It should be emphasized that our theoretical bounds for SRM are established within the RKHS framework. Consequently, the results naturally extend to learning problems whose hypothesis spaces admit an RKHS formulation, such as deep learning. However, when the hypothesis space does not admit an RKHS formulation, our theoretical results may no longer hold.

\section{Related Works}
\label{sec2} 

We briefly introduce some empirical and theoretical methods in DG. For more details, see the references. 

\noindent\textbf{Empirical Methods}: Since IRM~\citep{b2}, various extensions, including IRM-Games~\citep{b15}, invariant information bottleneck~\citep{b16}, and P-IRM~\citep{b3}, have been proposed, along with studies on invariant representation learning in latent spaces~\citep{b20, b21}. These methods assume that invariant features from observed domains improve generalization to Out-of-distribution domains. However, recent works have challenged this assumption by revealing failure cases of invariance-based methods~\citep{b18} and proposing low-risk alternatives~\citep{b19}. Causality-based approaches~\citep{b17, b22} have also been introduced to address domain shifts but generally require stronger assumptions. In addition, the sparsity of ML models has also gained increasing attention recently, such as in ~\citep{b23, b26, b28}. 

\noindent\textbf{Theoretical Methods}: From the VC bounds~\citep{b4,b32} to the Massart-noisy bounds~\citep{b35}, the mystery of the learning pattern in machine learning has been revealed gradually. These theories provide fundamental inequalities for analyzing learning problems and inspire subsequent studies. \cite{b34} provides an early theoretical study of learning under distribution shift, motivating further investigations in DG. Many efforts have since focused on developing guarantees for well-generalized ML models, e.g., \citep{b5, b38}. Our work contributes to this line of research by addressing learning problems under limited data.

\section{Conclusion}
\label{sec5}
In this study, we present theoretical guarantees for SRM in DG scenarios and provide corresponding applications. Our results imply that, even when data samples are limited, stable, well-performing, and generalizable models can still be learned, i.e., DG can be theoretically guaranteed in small-sample learning. This finding is important for practical applications of ML and deep learning methods, as sufficient training samples cannot be collected in every application field. Moreover, for fields where data collection is highly expensive, our theory provides an alternative solution. However, we cannot theoretically characterize the selection of $\lambda$, which serves as a limitation of our work. We expect to address this issue in future research.

\section*{Ackonwlegement}
I would like to express my sincere gratitude to my supervisors, Prof. T \& Prof. L, for providing me with the time, freedom, and support to pursue my own research interests
and ideas throughout my studies.

\appendix
\section{}
\label{app}

In this appendix, we only provide the lemmas, along with their porous, used to prove our main theorems. 

\subsection{Lemmas}
\label{applem}

\begin{lemma}[McDiarmid’s inequality~\citep{b31}]
\label{applem1}
Let $D$ and $D^v$ be well-defined, and let $ F: {\cal Z}^E \to {\mathbb R} $ be any measurable function for which there exist constants $c_v$ $(v=1,\ldots, E)$ such that 
\begin{align}
\mathop {\sup }_{D \in {\cal Z}^E, Z'_v \in {\cal Z}} \left| {F(D) - F({D^v})} \right| \le {c_v}  \nonumber 
\end{align} 
then, for any $\varepsilon > 0$, 
\begin{align}
P_{D}\left[ {F(D) - {{\mathbb E}_D}[F(D)] \ge \varepsilon } \right] \le \exp \left\{ {\frac{{ - 2{\varepsilon ^2}}}{{\sum\nolimits_{v \in [E]} {{c_v}} }}} \right\}. \nonumber
\end{align}
\end{lemma}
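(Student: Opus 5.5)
The plan is the standard martingale (Azuma--Hoeffding) argument, using the Doob decomposition along the $E$ domain-blocks $Z_1,\ldots,Z_E$. These blocks are mutually independent under the i.n.d.\ assumption of Section~\ref{sec3.1}, though not identically distributed. Identical distributions are never needed, because each step only conditions on the earlier blocks and uses independence from the later ones. Define
\begin{align}
V_k = {\mathbb E}[F(D)\mid Z_1,\ldots,Z_k] - {\mathbb E}[F(D)\mid Z_1,\ldots,Z_{k-1}], \quad k\in[E], \nonumber
\end{align}
so that $F(D)-{\mathbb E}_D[F(D)]=\sum_{k=1}^E V_k$, and each $V_k$ has zero conditional mean given $Z_1,\ldots,Z_{k-1}$.

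The key step is to show that, conditionally on $Z_1,\ldots,Z_{k-1}$, the variable $V_k$ lies in an interval of length at most $c_k$. By independence of the blocks, write ${\mathbb E}[F\mid Z_1,\ldots,Z_k]=g_k(Z_1,\ldots,Z_k)$, where $g_k$ integrates out $Z_{k+1},\ldots,Z_E$ against their own (fixed) laws. Set $U_k=\sup_z g_k(Z_1,\ldots,Z_{k-1},z)-{\mathbb E}[F\mid Z_{1:k-1}]$ and $L_k=\inf_z g_k(\cdot,z)-{\mathbb E}[F\mid Z_{1:k-1}]$. Then $L_k\le V_k\le U_k$. Moreover, $U_k-L_k\le c_k$, because for any two values $z,z'$ of the $k$-th block the integrands differ by $|F(D)-F(D^k)|\le c_k$ pointwise. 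Independence is exactly what allows the same law of $Z_{k+1:E}$ to be used for both $z$ and $z'$. I expect this step to be the main place where care is needed: it requires measurability of the sup and inf, which the paper's countability assumption handles.

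Next, apply Hoeffding's lemma conditionally: ${\mathbb E}[e^{sV_k}\mid Z_{1:k-1}]\le \exp(s^2c_k^2/8)$ for every $s>0$. Peeling off the conditional expectations from $k=E$ down to $k=1$ gives ${\mathbb E}[e^{s(F-{\mathbb E}F)}]\le \exp\big(s^2\sum_k c_k^2/8\big)$. Markov's inequality then yields $P[F-{\mathbb E}F\ge\varepsilon]\le \exp\big(-s\varepsilon+s^2\sum_k c_k^2/8\big)$, and the choice $s=4\varepsilon/\sum_k c_k^2$ gives $\exp\big(-2\varepsilon^2/\sum_k c_k^2\big)$.

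This argument produces $\sum_v c_v^2$ in the denominator, which is the classical form of McDiarmid's inequality, whereas the statement has $\sum_v c_v$. The two forms agree when $c_v\le 1$ for all $v$, since then $c_v^2\le c_v$ and $\exp\{-2\varepsilon^2/\sum c_v^2\}\le\exp\{-2\varepsilon^2/\sum c_v\}$. I would therefore prove the squared version and deduce the stated one under the normalization $c_v\le1$. This normalization holds in the application to Theorem~\ref{theo1}: there $c_v=\beta$, and $\beta\le M/E$ is at most $1$ once the loss is normalized. Without such a normalization the stated form does not follow from this argument, and I would flag this explicitly.
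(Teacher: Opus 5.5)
Your Azuma--Hoeffding argument along the independent domain blocks is correct, and it is the classical proof behind the reference the paper relies on; the paper omits the proof and only cites that reference. You are also right to flag the denominator, and the problem is sharper than ``does not follow from this argument'': with $\sum_v c_v$ the inequality is false in general (take $E=1$ and let $F(D)$ be uniform on $\{0,2\}$, so $c_1=2$; with $\varepsilon=1$ the left side is $1/2$ while the claimed bound is $e^{-1}$), so proving the $\sum_v c_v^2$ form and deducing the stated one only under $c_v\le 1$ is the correct treatment.
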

\begin{proof} 
This is a classical concentration inequality, and the proof is omitted here. For more details, please refer to~\citep{b31}. For clarity on how this lemma is applied in our case, we provide the following simple example. Consider that $\hat R(h, D): {\cal H} \times {\cal Z}^E \to {\mathbb R}$. Then, we have 
\begin{align}
\sup_{D,Z'_v} \left| \hat R(h, D) - \hat R(h, D^v)\right| \le c_v. \nonumber 
\end{align}
Namely, 
\begin{align}
P\left[ \hat R(h, D) - R(h, D) \ge \varepsilon \right] \le \exp\left\{ \frac{-2\varepsilon^2}{\sum_{v \in [E]}{c_v} } \right\}.  \nonumber 
\end{align}
\end{proof}

\begin{lemma}[Expectation Bound via JS divergence]
\label{applem2}
Let $P$ and $Q$ be two probability distributions. Let $F: {\cal Z} \to {\mathbb R}$ be a measurable function satisfying 
\begin{align}
\left\|F\right\|_{\infty}:=\mathop{\sup}\limits_{Z \in{\cal Z}}{\left| F(Z) \right|}\le M \le \infty. \nonumber 
\end{align}
Then,
\begin{align}
\left| {\mathbb E}_P[F]-{\mathbb E}_Q[F]\right|\le 2M\sqrt{2JS(P||Q)}, \nonumber 
\end{align}
or 
\begin{align}
\left| {\mathbb E}_P[F]-{\mathbb E}_Q[F]\right|\le 2\sqrt{2}M d_{JS}(P,Q). \nonumber 
\end{align}
\end{lemma}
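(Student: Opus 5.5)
The statement is Lemma~\ref{applem2}, a bound on $|\mathbb E_P[F]-\mathbb E_Q[F]|$ by the JS divergence. I would prove it by passing through total variation and the mixture $\mu=\tfrac12(P+Q)$, using Pinsker's inequality on each half of the JS divergence. Since all sets are assumed countable (or, more generally, working with densities with respect to a dominating measure), write
\begin{align}
\left|\mathbb E_P[F]-\mathbb E_Q[F]\right| = \left|\int F\,d(P-Q)\right| \le \|F\|_\infty \int |dP-dQ| \le 2M\,\mathrm{TV}(P,Q), \nonumber
\end{align}
where $\mathrm{TV}(P,Q)=\tfrac12\int|dP-dQ|$. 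This step is routine.

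Next, relate $\mathrm{TV}(P,Q)$ to the mixture. Since $P-\mu=\tfrac12(P-Q)$, we get $\mathrm{TV}(P,\mu)=\tfrac12\mathrm{TV}(P,Q)$, and similarly $\mathrm{TV}(Q,\mu)=\tfrac12\mathrm{TV}(P,Q)$. Hence
\begin{align}
\mathrm{TV}(P,Q)=\mathrm{TV}(P,\mu)+\mathrm{TV}(Q,\mu). \nonumber
\end{align}
Applying Pinsker's inequality (natural logarithm), $\mathrm{TV}(P,\mu)\le\sqrt{KL(P\|\mu)/2}$ and likewise for $Q$. By concavity of the square root, $\sqrt a+\sqrt b\le\sqrt{2(a+b)}$, so
\begin{align}
\mathrm{TV}(P,Q)\le \sqrt{KL(P\|\mu)+KL(Q\|\mu)} = \sqrt{2\,JS(P\|Q)}, \nonumber
\end{align}
using $JS(P\|Q)=\tfrac12 KL(P\|\mu)+\tfrac12 KL(Q\|\mu)$.

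Combining the two displays gives $|\mathbb E_P[F]-\mathbb E_Q[F]|\le 2M\sqrt{2JS(P\|Q)}$. This is the first form of the lemma, and the second form $2\sqrt2 M d_{JS}(P,Q)$ is just $d_{JS}=\sqrt{JS}$ rewritten. (A sharper constant is available, but we only need the stated one.)

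The only point requiring care is the convention for the JS divergence: the factor $\tfrac12$ in its definition, and the natural logarithm so that Pinsker's inequality holds with constant $1/2$. I would fix that convention explicitly at the start, matching Remark (c) after Theorem~\ref{theo2}. The remaining steps are standard, so I do not expect a genuine obstacle. If a slicker route were wanted, one could instead bound $\int|dP-dQ|$ directly via the joint convexity of the $f$-divergence, but the Pinsker-through-mixture argument is the one I would commit to.
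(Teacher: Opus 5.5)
Your proof is correct and follows essentially the same route as the paper: bound the difference of expectations by $2M\,\mathrm{TV}(P,Q)$, use $\mathrm{TV}(P,\mu)=\mathrm{TV}(Q,\mu)=\tfrac12\mathrm{TV}(P,Q)$ for the mixture $\mu$, and apply Pinsker to each $KL(\cdot\|\mu)$ term. The paper sums the two squared Pinsker inequalities, while you sum the unsquared ones and use $\sqrt a+\sqrt b\le\sqrt{2(a+b)}$; this is the same computation because the two halves are equal, and it correctly gives $\tfrac12\mathrm{TV}(P,Q)^2\le JS(P\|Q)$, whereas the paper's intermediate display has a stray factor $\tfrac12$ on the right, although its final constant $2M\sqrt{2JS}$ is correct.
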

\begin{proof}
The result follows from standard inequalities in information theory. First, by a standard total variation bound, 
\begin{align}
\left| {\mathbb E}_P[F]-{\mathbb E}_Q[F]\right|\le 2\left\|F\right\|_{\infty}d_{TV}(P,Q). \nonumber 
\end{align}
Applying the Pinsker's inequality, we have 
\begin{align}
\left(\frac{1}{2}d_{TV}(P,Q)\right)^2 &\le \frac{1}{2}KL(P||M) \nonumber \\
\left(\frac{1}{2}d_{TV}(P,Q)\right)^2 &\le \frac{1}{2}KL(Q||M). \nonumber 
\end{align}
Here, $M=(P+Q)/2$, $d_{TV}(P,M)=(1/2)d_{TV}(P,Q)$, and $d_{TV}(Q,M)=(1/2)d_{TV}(P,Q)$. Then, by the definition of the JS divergence, i.e., $JS(P;Q)=1/2KL(P;M)+1/2KL(Q;M)$, we obtain 
\begin{align}
2\left(\frac{1}{2}d_{TV}(P,Q)\right)^2 \le \frac{1}{2}JS(P||Q). \nonumber 
\end{align}

Here, we clarify how this lemma can be used to derive some results in our paper. Let $Z=(h(X),Y)$ and consider $\ell$ as $F$. Then, assuming $\|\ell\|_{\infty} \le M$, we have 
\begin{align}
{\mathbb E}_{Z \sim P}[|\ell(Z)|]-{\mathbb E}_{Z \sim Q}[|\ell(Z)|] &\le 2M \sqrt{2JS(P^{(h(X),Y)}||Q^{(h(X),Y)})}, \nonumber \\
&\le 2M \sqrt{2JS(P^{(X,Y)}||Q^{(X,Y)})}, \nonumber
\end{align}
considering $|a| - |b| \le |a - b |$. The last inequality is a consequence of the Data Processing Inequality from information theory. Note that $h$ is the same for both $P$ and $Q$. If this condition does not hold, the last inequality will not hold either. 
\end{proof}

\begin{lemma}
\label{applem3}
Let $h$ be the hypothesis obtained by SRM where $\ell$ is a loss function associated with a convex cost function $c(\cdot, \cdot)$. We denote by $B(\cdot)$ a positive non-decreasing real-valued function such that for all $y \in {\cal Y}$. Then, we have 
\begin{align}
\forall y' \in {\cal Y},c(y,y') \le B(y). \nonumber 
\end{align}
Moreover, $\ell$ is $\sigma$-admissible where $\sigma$ can be taken as 
\begin{align}
\sigma  = \mathop {\sup }\limits_{y' \in {\cal Y}} \mathop {\sup }\limits_{\left| y \right| \le B\left( y \right)} \left| {\frac{{\partial c}}{{\partial y}}(y,y')} \right|. \nonumber
\end{align} 
\end{lemma}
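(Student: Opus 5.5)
The plan is to reduce $\sigma$-admissibility to a one-dimensional Lipschitz estimate for the convex cost $c(\cdot,y')$, taken over the set of prediction values that an SRM hypothesis can actually produce. Since $\ell(h,(X,Y)) = c(h(X),Y)$, Definition~\ref{defi2} requires
\begin{align}
\left| c(h(X),Y) - c(h'(X),Y) \right| \le \sigma \left| h(X) - h'(X) \right| \nonumber
\end{align}
for all relevant $h,h'$ and all $(X,Y)$. This is a statement about the scalar map $y \mapsto c(y,y')$ on the interval containing the predictions. I therefore proceed in two stages. First I localise the predictions of SRM hypotheses to a bounded interval governed by $B(\cdot)$. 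Then I apply a mean value / subgradient argument for convex functions of one variable on that interval.

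\textbf{Localisation.} I compare the SRM minimizer $h$ with the zero hypothesis. Using $c(y,y') \le B(y)$ at the prediction value $0$,
\begin{align}
\lambda {\cal N}(h) \le \hat R_r(h) \le \hat R_r(0) \le B(0). \nonumber
\end{align}
In the RKHS setting of Theorem~\ref{theo4}, with ${\cal N}(h) = \|h\|_k^2$, Inequality~\eqref{eq2} then gives $|h(X)| \le \kappa\sqrt{B(0)/\lambda}$ for every $X$. This is the interval that the condition ``$|y| \le B(y)$'' in the statement is meant to encode. Because $B$ is positive and non-decreasing, the bound can be written as $|h(X)| \le B(\cdot)$ evaluated at the relevant scale. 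So every prediction value, for $h$ and for any competitor in the SRM class, lies in a compact interval $I$ on which the supremum defining $\sigma$ is taken.

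\textbf{Lipschitz step.} Fix $y' \in {\cal Y}$ and two points $y_1, y_2 \in I$. The function $y \mapsto c(y,y')$ is convex, hence absolutely continuous on $I$, with one-sided derivatives that are monotone. Integrating the derivative between $y_1$ and $y_2$ gives
\begin{align}
|c(y_1,y') - c(y_2,y')| \le \sup_{y \in I} \left|\frac{\partial c}{\partial y}(y,y')\right| \, |y_1 - y_2|. \nonumber
\end{align}
By monotonicity of the derivative, this supremum is attained at the endpoints of $I$, so it is finite. Taking the supremum over $y'$ gives the stated $\sigma$. Substituting $y_1 = h(X)$, $y_2 = h'(X)$ and $y' = Y$ yields $\sigma$-admissibility. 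The convexity requirement in Definition~\ref{defi2} holds by hypothesis.

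\textbf{Main obstacle.} I expect the hardest step to be the localisation, since the statement is loosely phrased: it does not say which norm or regulariser is used, and it writes $|y| \le B(y)$. I would first make the localisation rigorous in the RKHS case via the comparison with $h = 0$ above. I would then state explicitly that $\sigma$ is finite only on the resulting bounded range, handling non-differentiable points by one-sided derivatives or subgradients. If no such bound on the hypothesis range is available, the supremum may be infinite, as for the square loss on an unbounded domain. The lemma should therefore be read as conditional on this range restriction.
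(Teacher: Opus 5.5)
Your argument is the proof of Lemma~23 in \citep{b30} that the paper defers to, and the paper says that proof is identical. You compare the SRM objective with the zero hypothesis to get $\|h\|_k^2\le B(0)/\lambda$, turn this into $|h(X)|\le\kappa\sqrt{B(0)/\lambda}$ via Inequality~\eqref{eq2}, and bound the monotone one-sided derivatives of the convex cost on that interval. Your reading of the garbled range ``$|y|\le B(y)$'' as this interval is consistent with the loss bound $B(\kappa\sqrt{B(0)/\lambda})$ that the paper quotes from the original.

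One caveat: the paper remarks that its $h$ need not be the exact SRM minimizer, and it replaces the localised loss bound by the assumed $0\le\ell\le M$. Your step $\hat R_r(h)\le\hat R_r(0)$, by contrast, genuinely uses minimality and ${\cal N}(h)=\|h\|_k^2$. So your caveat is the right reading: $\sigma$ is finite only on such a restricted prediction range.
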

\begin{proof}
The proof can be found in~\citep{b30}, on page 515, referring to the proof of Lemma 23. Note that this lemma differs slightly from the original one, but the proof is identical.

In the original lemma, the authors derived the following bound for the loss function: $$\forall z \in {\cal Z},0 \le \ell (A_s,Z) \le B\left( {\kappa\sqrt {\frac{{B(0)}}{\lambda }} } \right).$$ Here, $A_s$ denotes the optimal hypothesis returned by the SRM algorithm. Consequently, the upper bound is established specifically. 

In our work, the hypothesis $h$ learned by SRM is not necessarily the minimizer, and the assumption $0 \le c(h(X), Y) \le M$ is inherited from Theorem~\ref{theo2}. Therefore, we do not strictly follow the original lemmas; instead, we adopt only the parts that are relevant to our analysis. 

Specifically, the above result follows directly from the consideration of $\sigma$-admissibility. 
\end{proof}

\bibliography{sample}

\end{document}